\documentclass[letterpaper, 10 pt, conference]{sty_files/ieeeconf}
\usepackage{graphicx}
\usepackage{sty_files/amice_shortcuts}
\usepackage{lipsum}
\usepackage{sty_files/tikz_styles}
\usepackage{bibunits}
\usepackage{svg} 
\usepackage{float}

\usepackage{leftidx}
\usepackage{ragged2e}
\usepackage{cite}
\usepackage{pdfpages}
\usepackage{subcaption}
\usepackage{tabularray}

\pdfminorversion=4
\usepackage[colorlinks,bookmarksopen,bookmarksnumbered,citecolor=blue,urlcolor=blue]{hyperref}
\hypersetup{ 
    colorlinks=true, 
    linkcolor=black,
    citecolor = black, 
    filecolor=magenta,      
    urlcolor=blue
    }

\usepackage{float}
\usepackage[linesnumbered, ruled, lined, boxed, commentsnumbered]{algorithm2e}
\usepackage{footmisc}
\graphicspath{{figures/}}

\IEEEoverridecommandlockouts                              %
                                                          
\author{Alexandre Amice \and
Peter Werner \and
Russ Tedrake
\thanks{The authors are with the Massachusetts Institute of Technology, {\tt\small\{amice, 
wernerpe,
 russt\}@mit.edu}. This work was supported by Amazon.com; PO\# \#2D-06310236, Air Force Research Laboratory FA8750-19-2-1000
}%
}

\title{Certifying Bimanual RRT Motion Plans in a Second}

\begin{document}

\maketitle
\begin{abstract}
We present an efficient method for certifying non-collision for piecewise-polynomial motion plans in algebraic reparametrizations of configuration space. Such motion plans include those generated by popular randomized methods including RRTs and PRMs, as well as those generated by many methods in trajectory optimization. Based on Sums-of-Squares 
optimization, our method provides exact, rigorous certificates of non-collision; it can never falsely claim that a motion plan containing collisions is collision-free. We demonstrate that our formulation is practical for real world deployment, certifying the safety of a twelve degree of freedom motion plan in just over a second. Moreover, the method is capable of discriminating the safety or lack thereof of two motion plans which differ by only millimeters.
\end{abstract}

\section{Introduction}

Collision-free motion planning is a fundamental problem in the safe and efficient operation of any robotic system. One of the most important subroutines in collision-free motion planning is collision detection, which has been studied extensively in robotics~\cite{cameron1985study, he2014efficient}, computer graphics~\cite{lin1998collision, hubbard1995collision}, and computational geometry more broadly~\cite{shamos1976geometric, lin2017collision}. Algorithms for checking whether a single configuration is collision free are quite mature and can be performed in microseconds on modern hardware \cite{pan2012fcl,tarbouriech2018bisection}.

By contrast, algorithms which are capable of certifying non-collision for the infinite family of configurations along a motion plan are less common. Known as dynamic collision checking, this subroutine is performed hundreds to thousands of times in randomized motion planning methods, and so the speed as well as the correctness of these algorithms is central to their adoption. Due to the speed requirement, it is common practice to heuristically perform dynamic collision checking by sampling a finite number of static configurations along the motion plan. Nevertheless, it is widely recognized that this is insufficient for safety critical robots.

This has motivated a number of more sophisticated algorithms for certifying the safety along a motion plan. These broadly fall into three major families: feature tracking, swept volumes, and trajectory parameterization methods~\cite{jimenez20013d}.

Both the feature tracking and swept volume methods have effectively scaled to applications in both graphics and robotics. The most successful methods rely on pre-computing a hierarchy of bounding volumes at various resolutions and certifying non-intersection at a finite number of points along the motion plan~\cite{jackins1980oct, tang2011ccq}. Interval arithmetic is used to certify the safety in between the static configurations \cite{pan2012collision}. The choice of samples is done in an adaptive manner which guarantees non-collision and some algorithms are fast enough for use as dynamic collision checkers during randomized motion planning~\cite{schwarzer2004exact, tang2011ccq}. 

These methods have two primary drawbacks. The first is that they frequently only consider piecewise linear configuration space motion plans and so cannot generalize to smoother plans. The second drawback can be their conservativeness. Typically, these methods require difficult-to-compute parameters such as an upper bound on the maximum length of a curve along the swept volume~\cite{schwarzer2004exact} or bounding the rate of change of the distance between two objects over the course of the trajectory~\cite{zhang2007continuous}. If these bounds are too loose, collision-free motion plans in tight configuration spaces cannot be certified, which is arguably the most important regime.

On the other hand, trajectory parametrizaton methods can certify any collision-free motion plan to arbitrary precision. This is because these methods make relatively minimal assumptions to obtain exact guarantees, requiring only knowledge of the forward kinematics of the robot and a concrete, description of the collision geometries~\cite{canny1986collision},~\cite{schweikard1991polynomial}. Both are often necessary information for the simulation of any robotic system and so are readily available. However, most of these methods rely on exact, algebraic computations such as polynomial root finding, and so are typically too slow for practical deployment. 
\begin{figure}[t]
    \centering
    \includegraphics[width = 0.7\linewidth]{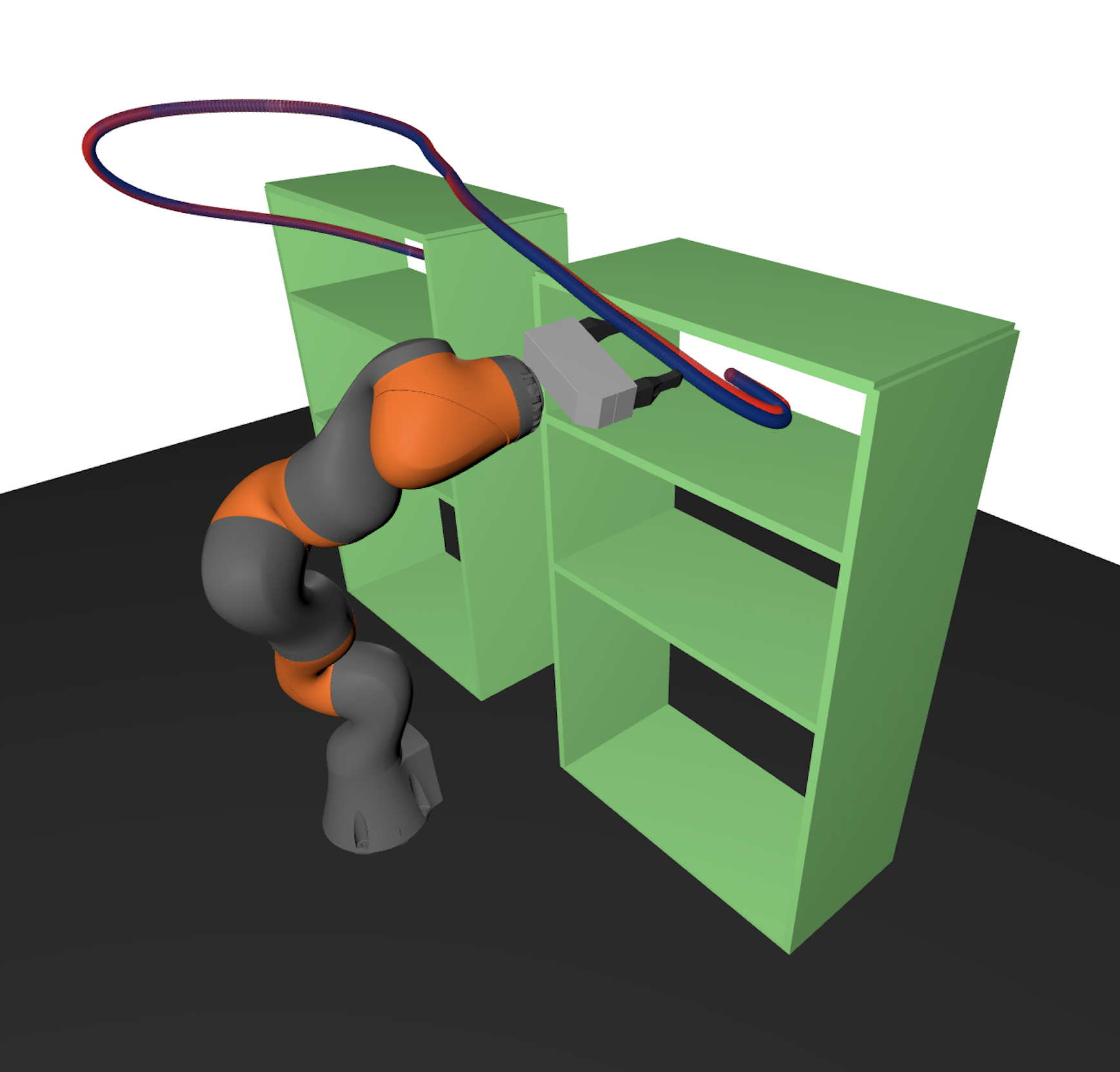}
    \caption{A 7-DOF Kuka iiwa reaching into a pair of shelves. Despite differing by at most 20mm, the blue motion plan is collision-free, while the red motion plan contains minor collision when reaching into each shelf. The proposed certification method is capable of discriminating the safety of these two motion plans.  A video describing the method is available at \url{https://youtu.be/oTiDYeptKis}}
    \vspace{-0.5cm}
    \label{F:iiwa_piecwise_poly}
\end{figure}

\subsection{Contribution}

This paper presents a certification method in the family of trajectory parametrization algorithms which is efficient enough for practical deployment on robotic systems. In particular, we employ sums-of-squares~(SOS) optimization to provide a rigorous method for certifying motion plans parametrized by polynomials of \emph{arbitrary degree}.

Our method specializes a technique for certifying non-collision in full-dimensional volumes of configuration space presented in~\cite{dai2023certified}. By restricting ourselves to the planned motion case, we are able to leverage stronger results in optimization and dramatically reduce computation times.

We deploy our method to certify a piecewise-cubic motion plan for a 7-DOF arm, and a rapidly-exploring random tree (RRT) for a 12-DOF, bimanual manipulation example. As seen in Figure \ref{F:iiwa_piecwise_poly}, we demonstrate that our method is capable of discriminating between safe and unsafe motion plans which are arbitrarily close together and can be used to certify that a pair of geometries do not collide along a motion plan in at most hundreds of milliseconds. The result is a certifier which can verify the safety of arbitrarily complicated polynomial motion plans in a handful of seconds, or piecewise-linear plans from an RRT in a second.

\subsection{Notation}
Throughout, we use calligraphic letters ($\calX$) to denote sets, Roman capitals ($X$)
to denote matrices, and Roman lower case ($x$) to denote vectors or scalars. We use $X \succeq 0$ to denote that a symmetric matrix $X$ belongs to the cone of positive semidefinite (PSD) matrices which are denoted as $\setS_{+}$ .

For a vector $x$ of $m$ variables, we denote by $[x]_{d} \coloneqq \begin{bmatrix} 1, x_{1}, \dots, x_{m}, x_{1}^{2}, x_{1}x_{2}, \dots, x_{m}^{d}\end{bmatrix}^{T}$ the vector of all ${ {m + d} \choose d}$ monomials in $x_{1}, \dots, x_{m}$ of degree less than or equal to $d$. The vector space of real polynomials in $x$ of degree $d$ is written as $\setR[x]_{d}$ while the $n\times k$ matrices with entries in $\setR[x]_{d}$ are written as $\setR[x]^{n\times k}_{d}$. If $d$ is omitted, then $\setR[x]$ and $\setR[x]^{n\times k}$ are the set of all polynomials of arbitrary degree.

Finally, for $p(x) \in \setR[x]$ we denote by $\deg_{x_{i}}(p)$ the degree of the polynomial $p$ in the variable $x_{i}$ and similarly $\deg_{x_{i}}(P) = \displaystyle{\max_{i,j}}\ \deg(P_{i,j}(x))$ for $P(x) \in \setR[x]_{d}^{n \times k}$. When the variable is clear, the subscript is suppressed.

\section{Mathematical Preliminaries} \label{S: math prelims}
In this section, we introduce the essential mathematical ingredients which we will leverage to provide efficient certification of non-collision along our robot motion plans. Our method relies on well-known results from convex optimization \cite{boyd2004convex}, methods from sums-of-squares programming \cite[Chapter 3]{blekherman2012semidefinite}, and an algebraic reparametrization of the forward kinematics \cite{wampler2011numerical}.  

\subsection{Separating Convex Bodies} \label{S: sep bodies}
    \begin{figure}[htb]
        \centering
        
    \input{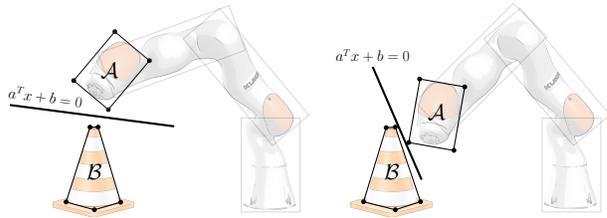}
        
        \caption{The closed, convex bodies $\calA$ and $\calB$ are collision-free if and only if there exists a hyperplane $a^{T}x + b = 0$ separating the two. As the bodies move in space, a different hyperplane may be needed to certify their non-collision.}
        \label{F: svm}
        \vspace{-0.2cm}
    \end{figure}
A well-known result from convex optimization is the Separating Hyperplane Theorem~\cite{boyd2004convex}, which states that two closed, convex bodies $\calA$ and $\calB$ do not intersect if and only there exists a hyperplane $a^{T}x + b = 0$ separating the bodies such as in Figure \ref{F: svm}. The search for such a hyperplane can be posed as the optimization program:
\begin{subequations} \label{E: sep hyperplane theorem}
\begin{gather}
    \find~ a, b~ \subjectto 
    \\
    a^{T}x + b > 0,~ \forall x \in \calA \label{E: sep hyperplane theorem cond A}
    \\ 
    a^{T}x + b < 0,~ \forall x \in \calB \label{E: sep hyperplane theorem cond B}
\end{gather}
\end{subequations}

which is convex since $\calA$ and $\calB$ are convex sets. The hyperplane $\calH = \{x \mid a^{T}x + b =0\}$ serves as a rigorous, mathematical proof that $\calA$ and $\calB$ do not intersect, and so we refer to $\calH$ as a \emph{separation certificate}.

\begin{example}\label{Ex: polytope condition}
    If $\calA$ is a polytope with vertices $\{v_{1}, \dots, v_{N}\}$, we can express \eqref{E: sep hyperplane theorem cond A} using $N$ linear constraints:
    \begin{align*}
        a^{T}v_{i} + b \geq 1 ~ \forall i \in \{1, \dots, N\}. 
    \end{align*}
\end{example}
\begin{example}\label{Ex: sphere condition}
    If $\calA$ is a sphere with center $c$ and radius $r$, we can express \eqref{E: sep hyperplane theorem cond A} using the PSD constraint:
    \begin{align*}
        \begin{bmatrix}
            (a^{T}c + b)I & ra \\
            ra^{T} & a^{T}c + b
        \end{bmatrix}
        \succeq 0
    \end{align*}
\end{example}

Further explicit examples of expressing \eqref{E: sep hyperplane theorem cond A} are available in~\cite{dai2023certified}.

\subsection{Polynomial Positivity on Intervals} \label{S: poly pos}
    Our method in Section \ref{S: non-collision on paths} will rely on a generalization of \eqref{E: sep hyperplane theorem} that was first introduced in~\cite{amice2022finding,dai2023certified}. This generalization relies heavily on the ability to certify that certain polynomials (resp. matrices) are non-negative (resp. positive semidefinite). Unlike in~\cite{amice2022finding,dai2023certified}, our polynomials will be \emph{univariate} and so we will be able to leverage much stronger theory than this prior work.

    We begin by recalling the definition of a Sums-of-Squares (SOS) polynomial and a SOS matrix~\cite{blekherman2012semidefinite}.
    \begin{definition}%
        A polynomial $p(x) \in \setR[x]_{2d}$ is \emph{sums-of-squares} if it can be expressed as $p(x) = \sum_{i=1}^{N} q_{i}^{2}(x)$ for $q_{i} \in \setR[x]_{d}$. Equivalently, $p(x)$ is SOS if it can be expressed as $[x]_{d}^{T}Q[x]_{d}$ for $Q \succeq 0$. The set of all SOS polynomials of degree $2d$ is denoted $\Sigma[x]_{2d}$.
    \end{definition}
    
    A similar notion exists for SOS matrices.
    \begin{definition}%
        A symmetric matrix $P(x) \in \setR[x]_{2d}^{n \times n}$ is a \emph{SOS matrix} if there exists a polynomial matrix $\Theta(x) \in \setR[x]_{d}^{k \times n}$ such that $P(x) = \Theta(x)^{T}\Theta(x)$. The set of all $n \times n$ SOS matrices of degree $2d$ is denoted $\Sigma[x]^{n \times n}_{2d}$.
    \end{definition}

    A useful characterization relating SOS matrices to SOS polynomials is given by the following theorem.
    \begin{theorem}[\cite{blekherman2012semidefinite}, Lemma 3.78]
        Let $P(x) \in \setR[x]_{2d}^{n \times n}$ be a symmetric polynomial matrix and let $y = \begin{bmatrix} y_{1}& \dots& y_{n} \end{bmatrix}$ be a vector of monomials. Define the scalar polynomial $p(x,y) = y^{T}P(x)y$. Then $P(x)$ is a SOS matrix if and only if $p(x,y)$ is SOS.
    \end{theorem}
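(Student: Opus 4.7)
The plan is to prove the two directions of the iff separately, with the forward direction being nearly immediate and the reverse direction requiring a homogeneity argument in the auxiliary variables $y$.

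For the $(\Rightarrow)$ direction, I would simply unpack the definition of a SOS matrix. Assuming $P(x) = \Theta(x)^{T}\Theta(x)$ for some $\Theta(x) \in \setR[x]_{d}^{k \times n}$, I would substitute to get $p(x,y) = y^{T}\Theta(x)^{T}\Theta(x)y = \|\Theta(x)y\|_{2}^{2} = \sum_{i=1}^{k}\bigl((\Theta(x)y)_{i}\bigr)^{2}$. Since each $(\Theta(x)y)_{i} = \sum_{j}\Theta_{ij}(x)y_{j}$ is a polynomial in $(x,y)$, this exhibits $p(x,y)$ as a sum of squares.

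For the $(\Leftarrow)$ direction, suppose $p(x,y) = \sum_{i=1}^{N} q_{i}(x,y)^{2}$ with $q_{i} \in \setR[x,y]$. The key observation is that $p(x,y) = y^{T}P(x)y$ is homogeneous of degree exactly $2$ in the $y$ variables, and I would exploit this to force each $q_{i}$ to be homogeneous of degree $1$ in $y$. Concretely, I would grade $\setR[x,y]$ by $y$-degree and decompose each summand as $q_{i} = \sum_{k\geq 0} q_{i}^{(k)}$, where $q_{i}^{(k)}$ collects the terms of $y$-degree exactly $k$. Then the $y$-degree $m$ component of $\sum_{i} q_{i}^{2}$ equals $\sum_{i}\sum_{j+k=m} q_{i}^{(j)}q_{i}^{(k)}$. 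Matching this against $p$, which lives entirely in $y$-degree $2$, yields: $y$-degree $0$ gives $\sum_{i}(q_{i}^{(0)})^{2}=0$, so $q_{i}^{(0)}=0$ for every $i$; and letting $D$ be the maximum $y$-degree appearing in any $q_{i}$, the $y$-degree $2D$ part of $\sum_{i}q_{i}^{2}$ is $\sum_{i}(q_{i}^{(D)})^{2}$, which must vanish if $D>1$, contradicting maximality. Hence each $q_{i}$ is homogeneous of $y$-degree $1$, i.e.\ $q_{i}(x,y) = \sum_{j=1}^{n}\theta_{ij}(x)\,y_{j}$ for some $\theta_{ij}(x) \in \setR[x]$.

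Forming the matrix $\Theta(x) \in \setR[x]^{N \times n}$ with entries $\theta_{ij}(x)$, I would then read off $\sum_{i} q_{i}^{2} = y^{T}\Theta(x)^{T}\Theta(x)\,y$, and since this equals $y^{T}P(x)y$ as polynomials in $y$, symmetry of both sides forces $P(x) = \Theta(x)^{T}\Theta(x)$. To close out the statement as written, I still need to verify the degree bound $\theta_{ij}(x) \in \setR[x]_{d}$: because $\sum_{i} q_{i}^{2}$ has total degree $2d+2$ and a sum of real squares cannot cancel its leading terms, each $q_{i}$ must have total degree at most $d+1$, and combined with $y$-degree $1$ this forces $x$-degree at most $d$ for each $\theta_{ij}$.

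The main obstacle is the bihomogeneity argument: it is tempting to simply ``take $y$-degree $1$ parts'' of each $q_{i}$, but this is not a valid SOS manipulation in general. The careful step is noticing that the highest $y$-degree components of the $q_{i}$ produce a sum of squares with no possibility of cancellation, which is what actually rules out $q_{i}$ having $y$-degree larger than $1$. Once that is in hand, the degree bookkeeping for $\Theta$ and the reconstruction of $P(x)$ follow mechanically.
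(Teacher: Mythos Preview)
Your argument is correct; both directions are handled cleanly, and the homogeneity-in-$y$ trick is exactly the right lever for the reverse implication. Note, however, that the paper does not supply its own proof of this statement: it is quoted verbatim as Lemma~3.78 from the cited reference, so there is nothing in the paper to compare against. Your write-up is essentially the standard textbook proof one finds for this lemma, and the degree-bookkeeping you add at the end (bounding $\deg_{x}\theta_{ij}\le d$ via the no-cancellation-of-top-terms observation for real sums of squares) is a nice touch that matches the degree requirement in the paper's definition of a SOS matrix.
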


    The characterization of SOS polynomials and matrices in terms of the existence of a semidefinite matrix $Q$ is attractive as it allows one to search for certificates of non-negativity using convex optimization, specifically semidefinite programming (SDP)~\cite{parrilo2000structured}. This is known as SOS programming.

    Finally, we recall two very strong theorems about the non-negativity of univariate polynomials and polynomial matrices on intervals.
    \begin{theorem}[Markov-Lucasz~\cite{roh2006discrete}]\label{T: poly pos on interval}
        A univariate polynomial $p(x)$ is non-negative on the non-empty interval $x \in [a,b]$ if and only if it can be expressed as
        \begin{align}\label{E: poly pos on interval}
            p(x) = 
            \begin{cases}
            \lambda(x) + (x-a)(b-x)\nu(x), & \deg(p)=2d
            \\
            (x-a)\lambda(x) + (b-x)\nu(x), & \deg(p)=2d+1
            \end{cases}
        \end{align}
        where $\lambda,~ \nu \in \bSigma[x]$.
        Moreover, if $d = \floor{\frac{\deg(p)}{2}}$ then
        \begin{align}\label{E: poly pos on interval deg conditions}
                \deg(\lambda) \leq 2d, ~~ 
                \deg(\nu) \leq 
                \begin{cases}
                    2d-2 & \text{if } \deg(p)=2d \\
                2d & \text{if } \deg(p)=2d+1
            \end{cases}
            .
        \end{align}
     \end{theorem}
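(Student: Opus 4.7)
The plan is to prove sufficiency and necessity separately. Sufficiency is immediate: for $x \in [a,b]$, both $(x-a)$ and $(b-x)$ are non-negative and every element of $\bSigma[x]$ is non-negative on $\setR$, so any $p$ of one of the two forms in \eqref{E: poly pos on interval} is non-negative on $[a,b]$.

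For necessity, I would start from the real factorization $p(x) = c\prod_i (x-r_i)^{m_i}\prod_j\bigl((x-\alpha_j)^2+\beta_j^2\bigr)^{n_j}$ and classify its factors by where their roots sit relative to $[a,b]$. Non-negativity of $p$ on $[a,b]$ forces every real root $r_i \in (a,b)$ to have even multiplicity $m_i$, for otherwise $p$ would change sign at $r_i$. Each irreducible quadratic factor is already a sum of two squares via $(x-\alpha)^2+\beta^2$, and each even-multiplicity real factor $(x-r_i)^{2m_i}$ is a single square. Invoking the Brahmagupta-Fibonacci identity $(u^2+v^2)(s^2+t^2)=(us+vt)^2+(ut-vs)^2$ to multiply these non-negative-everywhere pieces together collapses them into a single SOS polynomial $\sigma(x)$, so the decomposition reduces to handling the leftover odd-multiplicity real linear factors, all of whose roots lie in $(-\infty, a] \cup [b, \infty)$.

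The inductive step pairs a leftover factor $(x-r_1)$ with $r_1 \leq a$ and a leftover factor $(r_2-x)$ with $r_2 \geq b$ via an identity of the form $(x-r_1)(r_2-x) = \alpha(x-a)(b-x) + \ell(x)^2$ for suitable $\alpha \geq 0$ and linear $\ell$, whose coefficients are determined by matching leading, linear, and constant terms on both sides. Each such step reduces the leftover count by two, moves an $(x-a)(b-x)$ factor onto the $\nu$ piece of the decomposition, and absorbs $\ell^2$ into $\sigma(x)$ by Brahmagupta-Fibonacci again. The main obstacle is the parity bookkeeping that the two cases of \eqref{E: poly pos on interval} encode: when $\deg(p)=2d$ the leftovers pair exhaustively, producing $\lambda + (x-a)(b-x)\nu$; when $\deg(p)=2d+1$ exactly one leftover survives at an endpoint, producing $(x-a)\lambda + (b-x)\nu$. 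The degree bounds \eqref{E: poly pos on interval deg conditions} then follow by tracking total degree through the induction and noting that in the even case the $\nu$ term loses two degrees to the weight $(x-a)(b-x)$.
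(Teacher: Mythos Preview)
The paper does not supply its own proof of this theorem; it is quoted from \cite{roh2006discrete} with no argument, so there is nothing in-paper to compare against and your proposal has to stand on its own.

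Sufficiency is fine. The necessity argument has a genuine gap. After you strip off the globally SOS part coming from complex-conjugate pairs and even-multiplicity real roots, the surviving odd-multiplicity real roots outside $(a,b)$ need \emph{not} split evenly between $(-\infty,a]$ and $[b,\infty)$. Your inductive step only knows how to consume one factor $(x-r_1)$ with $r_1\le a$ together with one factor $(r_2-x)$ with $r_2\ge b$; it says nothing when all leftover roots sit on the same side. Concretely, take $p(x)=(x+1)(x+2)$ on $[0,1]$: both leftover roots are below $a=0$, there is nothing on the right to pair with, yet the required representation exists (e.g.\ $\nu=1$, $\lambda(x)=2x^{2}+2x+2=2(x+\tfrac12)^{2}+\tfrac32$). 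The same failure hits the odd-degree case: $(x+1)(x+2)(x+3)$ on $[0,1]$ has three leftovers on the left and none on the right, so the sentence ``exactly one leftover survives at an endpoint'' is simply false in general.

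To repair this you need a second identity that absorbs two same-side factors, i.e.\ a direct verification that $(x-s_1)(x-s_2)$ with $s_1,s_2\le a$ (and symmetrically with both $\ge b$) already lies in $\Sigma[x]+(x-a)(b-x)\,\Sigma[x]$ with the correct degrees; only then does the induction close and the degree bookkeeping in your last paragraph become valid. Alternatively, the classical route sidesteps the pairing entirely: map $[a,b]$ to $[0,\infty)$ via $t=(x-a)/(b-x)$, invoke the P\'olya--Szeg\H{o} characterization $q(t)=\sigma_{0}(t)+t\,\sigma_{1}(t)$ for polynomials non-negative on a half-line (where the univariate SOS\,=\,non-negative fact does all the work), and pull back. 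Either fix is required before the proof is complete.
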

     
     An analogous theorem holds in the univariate matrix case.
     \begin{theorem}\label{T: psd on interval}
        Let $P(x) \in \setR[x]^{m \times m}$ be a symmetric matrix of univariate polynomials. Then $P(x) \succeq 0$ on the non-empty interval $x \in [a,b]$ if and only if $p(x,y) = y^{T}P(x)y$ can be expressed as:
        \small{
        \begin{align}\label{E: psd on interval}
            p(x,y) =
            \begin{cases}
                \lambda(x,y) + (x-a)(b-x)\nu(x,y), & \deg(P)=2d
                \\
                (x-a)\lambda(x,y) + (b-x)\nu(x,y), & \deg(P)=2d+1
            \end{cases}
        \end{align}
        }
        where $\lambda,~ \nu \in \bSigma[x,y]$.
        Moreover, if $d = \floor{\frac{\deg(P)}{2}}$ then
        \begin{gather}\label{E: psd on interval deg conditions}
            \begin{gathered}
                \deg_{y_{i}}(\lambda) = 2,~ \deg_{y_{i}}(\nu) = 2, ~\deg_{x}(\lambda) \leq 2d
                    \\
                    \deg_{x}(\nu) \leq 
                    \begin{cases}
                        2d-2 & \text{if } \deg(P)=2d \\
                    2d & \text{if } \deg(P)=2d+1
                \end{cases}
            \end{gathered}
            .
        \end{gather}
    \end{theorem}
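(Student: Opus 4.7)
The strategy is to reduce the matrix statement to the scalar Markov-Lucasz theorem (Theorem~\ref{T: poly pos on interval}) via the quadratic-form correspondence $p(x,y) \coloneqq y^{T}P(x)y$. The key observation is that $P(x) \succeq 0$ on $[a,b]$ if and only if $p(x,y) \geq 0$ for all $(x,y) \in [a,b] \times \setR^{m}$, which converts matrix positivity into non-negativity of a scalar polynomial that happens to be quadratic in $y$. This will let me transport the univariate certificate from the scalar theorem into a joint certificate in $(x,y)$.

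The sufficient direction ($\Leftarrow$) is immediate from the form of \eqref{E: psd on interval}: for $x \in [a,b]$ the factors $(x-a)$, $(b-x)$, and $(x-a)(b-x)$ are non-negative, and $\lambda, \nu \in \bSigma[x,y]$ are non-negative everywhere by definition. Summing non-negative quantities gives $p(x,y) \geq 0$ for all $y$, which is equivalent to $P(x) \succeq 0$.

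The necessary direction ($\Rightarrow$) is the substantive content. The plan is to first establish the following matrix analogue of the scalar decomposition at the matrix level: when $P(x) \succeq 0$ on $[a,b]$ with $\deg(P) = 2d$, there exist polynomial matrices $\Lambda(x)$, $N(x)$ with
\[
P(x) = \Lambda(x)^{T}\Lambda(x) + (x-a)(b-x)\,N(x)^{T}N(x),
\]
and the analogous form in the odd-degree case. Once this matrix-level factorization is in hand, contracting with $y$ on both sides yields
\[
p(x,y) = \lVert \Lambda(x)y \rVert^{2} + (x-a)(b-x)\,\lVert N(x)y \rVert^{2},
\]
and each squared norm is manifestly an SOS polynomial in $(x,y)$, supplying the desired $\lambda$ and $\nu$. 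The $x$-degree conditions in \eqref{E: psd on interval deg conditions} are inherited from the scalar bounds by matching the highest-degree terms after extracting the factors $(x-a)$ and $(b-x)$, while the $y$-degree bounds $\deg_{y_{i}}(\lambda) = \deg_{y_{i}}(\nu) = 2$ are automatic from the bilinear structure of $y^{T}(\cdot)y$.

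The main obstacle is the matrix-level factorization itself, which is a genuine strengthening of Theorem~\ref{T: poly pos on interval}. I would prove it via finite-dimensional convex duality: the cone of symmetric $m \times m$ polynomial matrices of bounded degree that are PSD on $[a,b]$ is closed and convex, and so is the cone of matrices admitting the claimed decomposition. Characterizing the dual of the first cone via evaluation functionals at points of $[a,b]$ paired with rank-one test vectors, and the dual of the second via a matching moment-type condition through Lemma~3.78 of~\cite{blekherman2012semidefinite}, one sees that the duals coincide, forcing equality of the primal cones. A constructive alternative is an induction on $d$ that peels off real roots of $\det P(x)$ lying in $[a,b]$ using a matrix spectral factorization, but the duality argument is cleaner and more directly transferable from the scalar proof.
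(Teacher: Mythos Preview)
Your proposal is correct and traces essentially the same arc as the paper: obtain a matrix-level Markov--Lukács decomposition of $P(x)$ and then contract with $y$ to read off SOS certificates $\lambda,\nu$ in $(x,y)$, with the $y$-degree bounds forced by the bilinear structure of $y^{T}(\cdot)y$.

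The one substantive difference is in how the matrix-level factorization is secured. The paper does not run a fresh duality argument; it simply invokes the result of Choi~\cite{choi1980real} that a univariate polynomial matrix which is PSD on all of $\setR$ is already an SOS matrix. This is exactly the matrix analogue of the scalar fact ``non-negative univariate polynomial $=$ SOS'' that powers the proof of Theorem~\ref{T: poly pos on interval} in~\cite{roh2006discrete}, so once Choi's theorem is available the scalar argument transfers verbatim with matrices in place of scalars, and the degree bounds in \eqref{E: psd on interval deg conditions} come along for free. Your duality route reaches the same destination but effectively reproves Choi's theorem in passing; citing it directly is shorter and makes the dependence explicit. Your secondary suggestion of inducting on roots of $\det P(x)$ is more fragile: $\det P$ may vanish identically when $P$ is rank-deficient, and even when it does not, a root of the determinant need not factor cleanly out of $P$ as a matrix.
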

    \begin{proof}
        This follows by applying a similar argument as used in~\cite{roh2006discrete} to prove Theorem \ref{T: poly pos on interval} to the matrix $P(x)$ and leveraging the result of~\cite{choi1980real} that univariate PSD matrices are always SOS matrices.
    \end{proof}
    
    The polynomials $\lambda$ and $\nu$ in Theorems \ref{T: poly pos on interval} and \ref{T: psd on interval} are traditionally referred to as multiplier polynomials and serve as a \emph{certificate of non-negativity} on the interval $[a,b]$.

    \begin{remark}
        Analogs of Theorems \ref{T: poly pos on interval} and \ref{T: psd on interval} exist when the polynomials are allowed to be multivariate and are leveraged in~\cite{amice2022finding,dai2023certified}. The main advantage to restricting to the univariate case is the explicit degree bounds on the multiplier polynomials which are not available in the multivariate case.
    \end{remark}

\subsection{Algebraic Forward Kinematics} \label{S: alg kin}
    Our approach in Section \ref{S: non-collision on paths} will rely critically on a polynomial parametrization of the forward kinematics of a robot. 
    \begin{definition}
        A rigid-body robot is called \emph{algebraic} if all links are connected by a composition of the following two joints:
        \begin{itemize}
                \item Revolute (R): a 1-DOF joint permitting revolution about an axis. An example is a door hinge.
                \item Prismatic (P): a 1-DOF joint permitting translation along an axis. An example is a linear rail.
        \end{itemize}
    \end{definition}
    In addition to (R) and (P) joints, many common joints such as cylindrical, planar, and spherical joints can be represented as a composition of R and P joints \cite{wampler2011numerical}.
    
    In general, the forward kinematics of an algebraic robot can be expressed as a multilinear trigonometric polynomial~\cite{wampler2011numerical}. Concretely, using the monogram notation from~\cite{tedrakeManip}, the $w$\textsuperscript{th} component (for $w \in \{x,y,z\}$) of the position of a point $A$ relative to a frame $F$ and expressed in $F$ is can be written as:
    \begin{align}\label{E: natural forward kin}
        \leftidx{^{F}}p^{A}_{w}(q) = \sum_{j} c_{jw}\prod_{i} \xi_{ij,w}(q_{i}),
    \end{align}
    where $c_{jw}$ are constant coefficients, $\theta_{i} \coloneqq q_{i}$ with $\xi_{ij,w}(q_{i}) \in \{\cos(\theta_{i}), \sin(\theta_{i})\}$ if the $i$\textsuperscript{th} joint is associated to an (R) joint, and $z_{i} \coloneqq q_{i}$ with $\xi_{ij,w}(q_{i}) = z_{i}$ if the $i$\textsuperscript{th} joint is associated to a (P) joint. The collection of variables $q = \cup_{i} \{\theta_{i}, z_{i}\}$ are referred to as the configuration-space (C-space) variables.

    While \eqref{E: natural forward kin} is not a polynomial due to the presence of the trigonometric functions $\sin(\theta_{i})$ and $\cos(\theta_{i})$, it admits a rational reparametrization.

    \begin{definition}
        Define the substitution $\tau_{i} \coloneqq \tan\left(\frac{\theta_{i}}{2}\right)$.
        This substitution implies that 
        \begin{align*}
            \cos(\theta_{i}) = \frac{1-\tau_{i}^{2}}{1+\tau_{i}^{2}}, ~~~ \sin(\theta_{i}) = \frac{2\tau_{i}^{2}}{1+\tau_{i}^{2}}.
        \end{align*}
        The collection of variables $s = \cup_{i} \{\tau_{i}, z_{i}\}$ are referred to as the \emph{tangential-configuration space (TC-space) variables}.
    \end{definition}
    This substitution is known as the stereographic projection~\cite{spivakCalc} and is bijective for $\theta_{i} \in (-\pi, \pi)$. Under this substitution, \eqref{E: natural forward kin} can be expressed as an \emph{rational function} with a positive, polynomial denominator:
    \begin{gather}\label{E: rational forward kinematics gen}
        \leftidx{^F}p^{A}_{w}(s) =  \sum_{j} c_{jw} \prod_{i \in \calI_{F,A}} \frac{\leftidx{^F}f_{ij,w}^{A}(s_{i})}{\leftidx{^F}g_{ij,w}^{A}(s_{i})}  = 
            \frac{\leftidx{^F}f^{A}_{w}(s)}{\leftidx{^F}g^{A}_{w}(s)} 
        \end{gather}
        where $$\frac{\leftidx{^F}f_{ij,w}^{A}(s_{i})}{\leftidx{^F}g_{ij,w}^{A}(s_{i})} \in \left\{\frac{1-\tau_{i}^{2}}{1+\tau_{i}^{2}}, \frac{2\tau_{i}}{1+\tau_{i}^{2}}, \frac{z_{i}}{1}\right\}.$$
        
    \begin{example}
        We consider the forward kinematics of a pendulum mounted on a moving rail shown in Figure \ref{F: pend on rail}. Letting $z$ denote the commanded position from the left wall, and $\theta$ the angle from the vertical, the position $(p_{x}, p_{y})$ of the tip of the pendulum in C-space and TC-space coordinates is: 
        \bgroup
\def\arraystretch{2.2}%
        \begin{center}
        \begin{tabular}{c | c | c }
        & C-space & TC-space \\
        \hline \hline
        $p_{x}$ & $\displaystyle{l_{1}\sin(\theta) + z}$ & $\displaystyle{l_{1}\left(\frac{2\tau}{1+\tau^{2}}\right) + z}$
        \\
        \hline
        $p_{y}$ & $\displaystyle{l_{1}\cos(\theta)}$ & $\displaystyle{l_{1}\left(\frac{1-\tau^{2}}{1+\tau^{2}}\right)}$
        \end{tabular}
        \end{center}
                \egroup

    \end{example}
    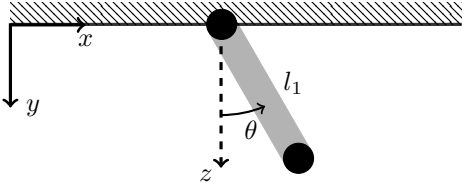
\begin{figure}[htb]
        \centering
        \begin{tikzpicture}
    \draw[rotate = 30,fill, opacity = 0.3]  (-3.8926,3.9595) rectangle (-3.504,1.8985);
    \draw[fill]  (-5.1915,1.6064) ellipse (0.2 and 0.2);
    \draw[fill]  (-4.17,-0.175) ellipse (0.2 and 0.2);
    \draw[very thick, <->] (-8,0.5) -- (-8,1.6) -- (-7.0,1.6);
    \draw[very thick, dashed, <->] (-5.2,-0.3) -- (-5.2,1.6);
    \node at (-7.0,1.4) {$x$};
    \node at (-7.7,0.5) {$y$};
    \node at (-5.4,-0.4) {$z$};
    
    \draw[->,thick] (-5.2,0.4) arc (-90:-67:1.5);
    
    \node at (-4.8,0.2) {$\theta$};
    \node at (-4.2388,0.8241) {$l_1$};

\draw[EDB] (-8,1.6064) -- (-2, 1.6064) -- (-2,1.9) -- (-8,1.9) --cycle;
\draw[very thick, opacity = 0.8]  plot[smooth] coordinates {
    (-8,1.6064) 
    (-2, 1.6064)
};
    
\end{tikzpicture}
        \caption{The pendulum on rail system.}\label{F: pend on rail}
    \end{figure}

    Notice that the stereographic projection induces some warping of distances in configuration space. However, in the limited joint range of many standard industrial manipulators, this warping is minor and straight lines in TC-space are approximately straight lines in C-space.

    \begin{remark}
        An alternative reparametrization of \eqref{E: natural forward kin} introduces the substitutions $\sigma_{i} \coloneqq \sin(\theta_{i})$ and $c_{i} \coloneqq \cos(\theta_{i})$ with the constraint that $\sigma_{i}^{2} + c_{i}^{2} = 1$ and is known as the algebraic-configuration space (AC-space). All results in this paper are easily extended to AC-space with no change in the underlying mathematics. We prefer to use TC-space throughout as it is easier to generate feasible motion plans.
        
    \end{remark}

\section{Problem Formulation}
    We consider an algebraic, rigid-body robot operating in a known environment. The geometry of the robot and all obstacles in the environment are assumed to have a known decomposition as a union of compact, convex bodies such as spheres, capsules, cylinders, or polytopes. Such collision geometries of our task space are readily available through standard tools such as V-HACD~\cite{mamou2009simple} and are often a required step for simulating any given environment. We refer to the pairs of bodies which can collide as the collision pairs. Under these assumptions, we define the following problem.
    \begin{problem}{\cert}\label{P: cert problem}
        Given an arbitrary polynomial motion plan in TC-space $\rho(t): [0,1] \mapsto s$, \textbf{certify} that the plan contains no collisions. Formally, we seek an algorithm with no false optimism; The algorithm answers \safe only if $\rho$ contains no collisions.
    \end{problem}
    Notice that solving \cert also covers the case when the plan $\rho$ is a piecewise polynomial, as we can simply certify each piecewise segment individually.

    \begin{remark}   
        Finitely sampling for collisions along the plan is an algorithm with false optimism. It can return \safe when the plan does in fact contain collisions. We prefer an algorithm which only declares \safe when the plan is in fact safe, but may sometimes declare \nsafe even if the plan is \safe. 
    \end{remark}

    \section{Proving Non-collision along a Plan} \label{S: non-collision on paths}
    In this section, we provide an efficient solution to \cert based on convex optimization, specifically SOS programming. Our method is based on generalizing \eqref{E: sep hyperplane theorem} to handle a motion plan $\rho(t)$ of configurations.

    Let $\calA(s)$ denote the position of the convex body $\calA$ at the TC-space configuration $s$. Recall from Section \ref{S: alg kin} that we can express any point $x(s) \in \calA(s)$ as a rational function with a positive denominator and so $x(\rho(t))$ for $t \in [0,1]$ is again a rational function with a positive denominator (rational functions are closed under composition with polynomials). 

    Now, as the position of two bodies $\calA(t) \coloneqq \calA(\rho(t))$ and $\calB(t) \coloneqq \calB(\rho(t))$ vary over the motion plan, a single, static hyperplane may be insufficient to certify that $\calA(t)$ and $\calB(t)$ do not collide as in Figure \ref{F: svm}. Therefore, we allow the hyperplane parameters $a$ and $b$ to also vary as a polynomial function of $t$. 

    For every pair of bodies $(\calA(t), \calB(t))$ which can collide in the environment, we search for a polynomially parametrized family of hyperplanes $(a_{\calA,\calB}(t), b_{\calA,\calB}(t))$ which certify that $(\calA(t), \calB(t))$ do not collide for $t \in [0,1]$. Concretely:

    \begin{subequations}\label{E: cert by hyperplane poly}
        \begin{gather} 
            {\forall~ \text{pairs } \calA,\calB} \nonumber
            \\
            ~\find a_{\calA,\calB}(t), b_{\calA,\calB}(t)
            ~~\textbf{subject to}
            \\
            \begin{rcases}
            a^{T}_{\calA, \calB}(t)x(t) + b_{\calA, \calB}(t) > 0, ~\forall x \in \calA(t)  %
            \\
            a^{T}_{\calA, \calB}(t)x(t) + b_{\calA, \calB}(t) < 0, ~\forall x \in \calB(t) %
            \end{rcases}
            \forall t \in [0,1].
            \label{E: cert by hyperplane sep cond}
        \end{gather}
        \end{subequations}
    This is an optimization program over polynomials which can be solved using SDP by transforming \eqref{E: cert by hyperplane sep cond} into equivalent linear and semidefinite constraints using Theorems \ref{T: poly pos on interval} and \ref{T: psd on interval}. A feasible solution to \eqref{E: cert by hyperplane poly} is a collection of polynomials
    \begin{align*}
    \calC = \bigcup_{\calA, \calB}\left\{a_{\calA, \calB}, b_{\calA, \calB}, \lambda_{\calA}, \nu_{\calA}, \lambda_{\calB}, \nu_{\calB}, \right\}
    \end{align*}
    and is a \emph{certificate of non-collision}. At every time point $t \in [0,1]$, the polynomials $(a_{\calA, \calB}(t), b_{\calA, \calB}(t))$ serve as a \emph{separation certificate} for $(\calA(t), \calB(t))$. Meanwhile $\{\lambda_{\calA}, \nu_{\calA}, \lambda_{\calB}, \nu_{\calB}\}$ serve as a \emph{certificate of positivity} that $(a_{\calA, \calB}(t), b_{\calA, \calB}(t))$ are a separation certificate for every ${t \in [0,1]}$.

    \begin{example}\label{Ex: polytope condition rational}
        If $\calA(t)$ is a polytope with vertices $v_{i}$ at positions $\leftidx{^{F}}p^{v_{i}}(t) = \frac{\leftidx{^F}f^{v_{i}}(\rho(t))}{\leftidx{^F}g^{v_{i}}(\rho(t))}$ for $i \in \{1,\dots,N\}$, then we can enforce \eqref{E: cert by hyperplane sep cond} using $N$ polynomial constraints:
        \begin{align*}
            a^{T}(t)\leftidx{^F}f^{v_{i}}(\rho(t)) + (b(t)-1)\leftidx{^F}g^{v_{i}}(\rho(t)) \geq 0.
        \end{align*}
        This constraint can be enforced using Theorem \ref{T: poly pos on interval}.
    \end{example}
    \begin{example}\label{Ex: sphere condition rational}
        If $\calA$ is a sphere with center $c$ at position $\leftidx{^{F}}p^{c}(t) = \frac{\leftidx{^F}f^{c}(\rho(t))}{\leftidx{^F}g^{c}(\rho(t))}$ and radius $r$, we can express \eqref{E: cert by hyperplane sep cond} using the SOS matrix constraint:
        \begin{align*}
            \small{
            \begin{bmatrix}
                h(t)I & r\left(\leftidx{^F}g^{c}(\rho(t))\right)a(t) \\
                r\left(\leftidx{^F}g^{c}(\rho(t))\right)a^{T}(t) & h(t)
            \end{bmatrix}
            \succeq 0
            }
        \end{align*}
        where $h(t) = a^{T}(t)\leftidx{^F}f^{c}(\rho(t)) + b(t)\leftidx{^F}g^{c}(\rho(t))$. Such a positivity constraint can be enforced using Theorem \ref{T: psd on interval}.
    \end{example}

    \begin{remark}
        Solving \eqref{E: cert by hyperplane poly} requires choosing a finite degree basis for all polynomials. As Theorems \ref{T: poly pos on interval} and \ref{T: psd on interval} specify the necessary and sufficient degree for the univariate multiplier polynomials, the only hyperparameter in \eqref{E: cert by hyperplane poly} is the degree of the hyperplanes. 
        This is in contrast to~\cite{dai2023certified,amice2022finding} where both the degree of the hyperplane and the degree of the multipliers are difficult-to-choose hyperparameters due to the multivariate nature of the problem.
    \end{remark}

\section{Results}

In this section, we evaluate the effectiveness of the proposed trajectory certification program \eqref{E: cert by hyperplane poly} for solving  \cert. An open source implementation of our method is \href{https://github.com/AlexandreAmice/drake/tree/CspaceFreePathFeature}{publicly available}\footnote{\url{https://github.com/AlexandreAmice/drake/tree/CspaceFreePathFeature}} and undergoing code review in Drake~\cite{drake}. All computations are performed on a laptop computer with an 11th Generation Intel I9 CPU and 64GB of RAM.

Each segment is certified by running one instance of \eqref{E: cert by hyperplane poly} for \emph{each collision pair} to understand the performance of the certification scheme in various regimes of the state space. In practical settings, it is possible to rapidly cull some of the collision pairs which cannot collide over the course of the motion by using, for example, bounding-volume hierarchies \cite{pan2012fcl, redon2005fast} which would improve the runtime in our experiments.

\subsection{Certifying an RRT for A Bimanual Manipulator}
\begin{figure}[tb]
\centering
    \includegraphics[width = 0.9\columnwidth, trim = {0cm 0cm 0cm 0.5cm},clip]{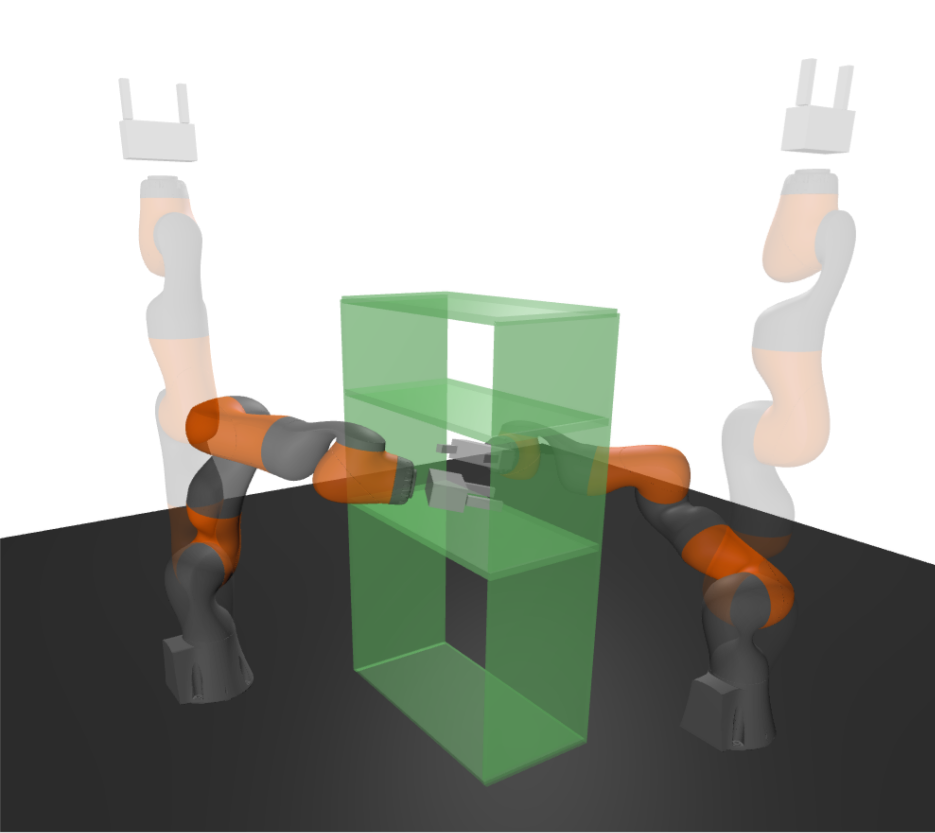}
    \caption{A pair of Kuka iiwa reaching from the straight up start configuration (translucent arms) into the shelf (opaque arms). The close confines of this motion plan make checking safety via finite sampling challenging. An animation of the motion plan is available 
    \href{https://alexandreamice.github.io/project/c-iris-path/14dof_hyperplanes.html}{here.}
    \vspace{-0.5cm}
    }\label{F: bimanul example}
\end{figure}

We consider the task of certifying a motion plan for a 12 degree of freedom (DOF) bimanual KUKA iiwa reaching into the shelf. The start and end configurations are shown in Figure \ref{F: bimanul example}. The environment contains 246 total collision pairs.

An RRT is grown in TC-space, with edges extended by sampling one hundred intermediate points between tree nodes. We grow the RRT until the task and goal are connected and the motion plan between them is certified as \safe by \eqref{E: cert by hyperplane poly}. This requires 105 edges and so a total of 25,830 instances of \eqref{E: cert by hyperplane poly} are solved with the hyperplane parameterized by a linear polynomial. All programs are solved in parallel.

In Figure \ref{F: bimanual timing carpet}, we plot the entire distribution of solve times for the 25,830 programs. We group the collision geometry pairs by the highest degree condition given by \eqref{E: cert by hyperplane poly}. The most expensive program takes 297 ms to solve with Mosek~\cite{mosek}, while the least expensive program takes 0.66 ms to solve. We note that Figure \ref{F: bimanual timing carpet} is plotted on a log scale and demonstrates approximately quadratic growth as the degree of the polynomials increases to the right. On the other hand, almost no pattern is observed in the length of time taken to certify an individual edge for a given pair.

In Table \ref{T: bimanaul stats}, we provide some aggregated statistics on the certification procedure. We recall that an edge of the RRT is considered safe if \eqref{E: cert by hyperplane poly} is feasible for \emph{all} 246 collision pairs. Due to the very close proximity of the collision geometries, we do not expect every edge to be collision-free. Indeed, we see that just under 30\% of the edges in our tree are certified as collision free.

 We observe that many of the uncertified edges correspond to motions in the tight confines of the shelf and so may in fact contain collisions. Therefore, if \eqref{E: cert by hyperplane poly} for a given edge is not feasible for every collision pair, then we resample $1e5$, uniformly spaced configurations along that edge. If a collision is found, we confirm that the edge is \nsafe. Otherwise, we cannot confirm or deny the safety of the edge. We see that in 96\% of cases, more dense sampling recovers a collision, which verifies that the infeasibility of \eqref{E: cert by hyperplane poly} for these edges is not due to choosing too low a degree for the hyperplane. On the other hand, exactly two edges are neither declared \safe, nor confirmed \nsafe. These two examples correspond to motions which undergo very large task space displacement of all links, but do not appear to contain collisions. It is likely the case that a higher degree polynomial could certify these edges.

\begin{figure*}[htb] \label{F: bimanual stats}
    \begin{subfigure}[c]{0.45\textwidth}
    \includegraphics[width = \columnwidth,trim = {0.25cm 0.25cm 0.3cm 0.25cm},clip]{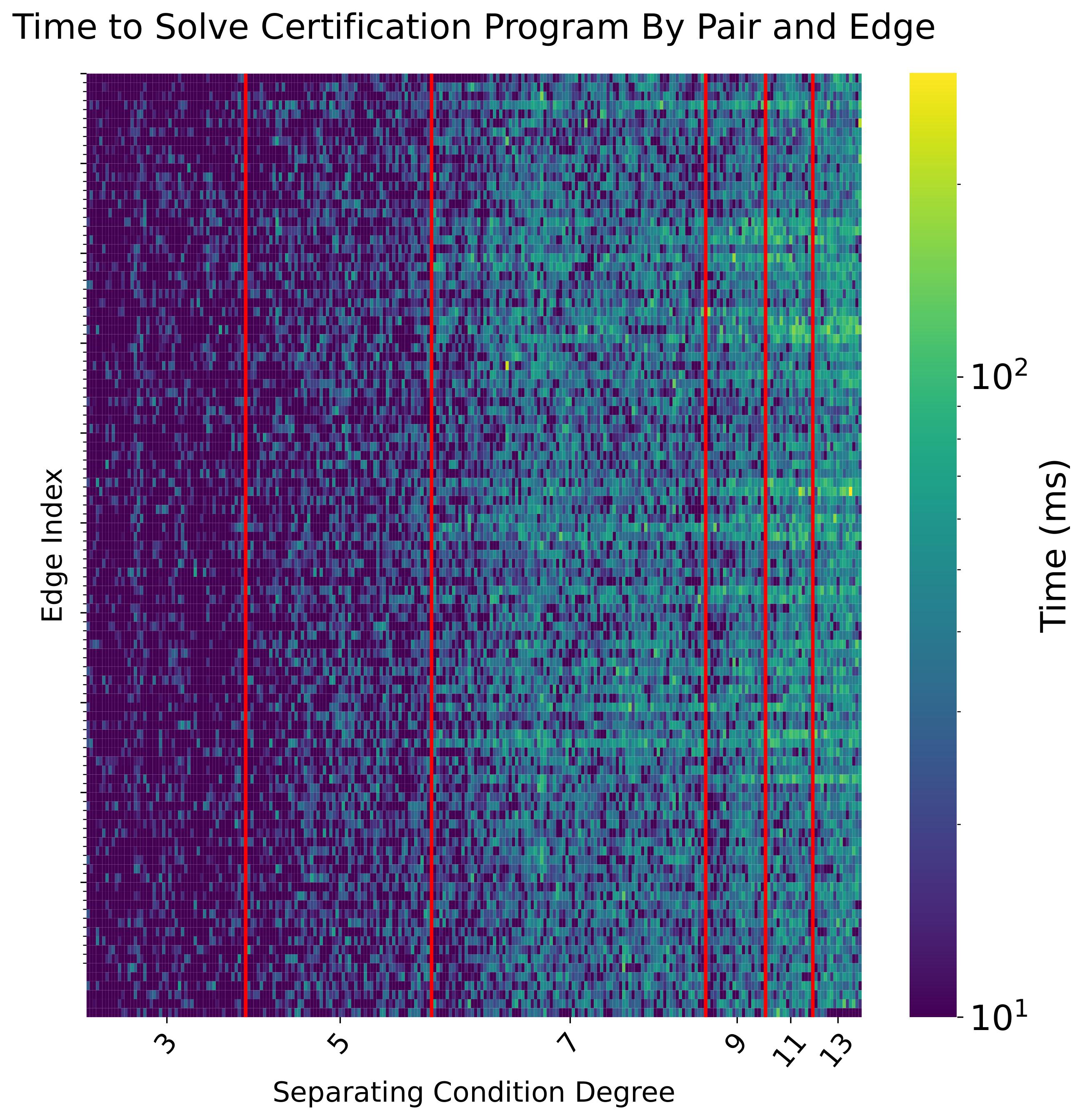}
    \subcaption{The time required to certify each edge and each collision pair in an RRT for the bimanual iiwa system. The y-axis is the index of the edge, and the x-axis is the index of the collision pair, grouped by the degree of the separating hyperplane condition \eqref{E: cert by hyperplane sep cond}. The time taken to solve the certification program for a collision pair scales quadratically in the degree of the separating condition \eqref{E: cert by hyperplane sep cond}
    }\label{F: bimanual timing carpet}
    \end{subfigure}
    \hfill
    \begin{subtable}[c]{0.50\textwidth}
       \centering
        \begin{tblr}{Q[6cm,halign=c,valign=m] |
        Q[2cm,halign=c,valign=m] }
        \SetCell[c=2]{c}{\bfseries Aggregate Statistics For Certifying the Entire RRT}
        \\
        \hline \hline
                \# Instances of \eqref{E: cert by hyperplane poly} Solved & 25830 \\
                \hline
        \# Of Collision Pairs & 246 \\
        \hline
        \# Of Edges & 105 \\
        \hline
        \# Edges \safe & 31
        \\
        \hline
        \# Edges Confirmed \nsafe & 72
        \\
        \hline
        \# Edges Unconfirmed \nsafe & 2
        \\
        \hline
        Average Solver Time to Certify an Edge & 299.9 msec. \\
        \hline
        Total Solver Time to Certify Goal Plan & 1.211 sec.
        \\
        \hline
        Total Solver Time to Certify RRT & 31.5 sec. \\
       \end{tblr}
       \vspace{1.8cm}
       \subcaption{Aggregated statistics for certifying an RRT for the bimanual iiwa system. An edge is \safe if all 246 instances of \eqref{E: cert by hyperplane poly} are feasible.  If \eqref{E: cert by hyperplane poly} is infeasible, the corresponding edge is densely sampled with $1e5$ points to find a collision. If a collision is found, the edge is confirmed \nsafe. We see that in two cases, neither \eqref{E: cert by hyperplane poly} is feasible for all pairs, nor is a collision found. The final RRT plan is certified as safe in just over 1 second, and the whole tree is certified in about 30 seconds.
       }
       \label{T: bimanaul stats}
    \end{subtable}
\caption{
Timing statistics and aggregated statistics for certifying an RRT for the bimanual iiwa system from Figure \ref{F: bimanul example}. An instance of \eqref{E: cert by hyperplane poly} is solved for all 246 collision pairs and all 105 edges in just over $30$ seconds.
}
\end{figure*}

\subsection{Certifying Cubic Polynomial Plans for a 7-DOF iiwa}
We demonstrate our method's ability to certify higher-order motion plans. We consider a 7-DOF Kuka iiwa interacting with a pair of shelves shown in Figure \ref{F:iiwa_piecwise_poly} moving along a piecewise polynomial plan with thirty pieces. Each piece is parametrized as a cubic, Hermite spline

We consider two such motion plans. The blue plan in Figure \ref{F:iiwa_piecwise_poly} is collision free, while exactly two of the thirty pieces of the red plan contain minor collisions with the shelf. We solve \eqref{E: cert by hyperplane poly} with a linearly parametrized hyperplane for each segment of the plan. 

In the case of the blue curve, all thirty pieces of the motion plan are certified as \safe in 8.99 seconds. Additionally, the twenty-eight pieces of the red plan which are safe are marked as \safe. Meanwhile, the optimizer reported that \eqref{E: cert by hyperplane poly} is infeasible for the two unsafe segments. The total time to solve the programs for unsafe plan was 9.21 seconds, which can be reduced to 6.91 seconds if the two unsafe segments are allowed to terminate as soon as one collision pair fails to certify its safety.

 This demonstrates the precision of our method, which is capable of discriminating the safety of two visually indistinguishable motion plans which differ by at most 20mm.

\section{Discussion and Conclusion}
We present a method based on SOS programming for certifying non-collision along piecewise polynomial motion plans of arbitrary degree for kinematic systems composed of algebraic joints. These systems include the majority of open kinematic chain robots currently available. The proposed method provides fully rigorous certificates of non-colllision for individual linear motion plans in milliseconds even for high DOF systems and can certify more complicated plans in a handful of seconds. 

Though currently not fast enough to fully replace a more rapid, sampling-based collision checker as a subroutine in a sampling-based motion planner, we anticipate that further efficiency improvements are possible. As already noted, the number of instances of \eqref{E: cert by hyperplane poly} can be dramatically reduced by pruning collision pairs which cannot possibly collide on the motion plan.
Secondly, all problems are solved using the general-purpose SDP solver Mosek. The formulation \eqref{E: cert by hyperplane poly} has substantial structure for which accelerated solution methods exist \cite{yuan2022polynomial}.

The primary drawback to the adoption of our method is the requirement that the motion plans be algebraic, i.e. they must be given in a polynomial reparameterization of the configuration space such as the TC-space or the AC-space forcing the practioner to change their choice of configuration-space variable. In the case of standard randomized motion plans such as PRMs and RRTs, this corresponds to simply changing the notion of distance and linear interpolation in the extend step of these methods.

While this may be a low barrier for purely kinematic motion plans, it may be more challenging for dynamic plans. The TC-space reparametrization greatly distorts configuration space near the joint angle $\theta_{i} = \pi$. Meanwhile the AC-space requires generating plans which conform to a strict constraint manifold. 

Nevertheless, the proposed method is suitable for real world application, providing fully rigorous certificates of non-collision with provable correctness.

\section{Acknowledgements}
The authors are grateful to Hongkai Dai for his help in improving both the quality of this manuscript and the numerical implementation of this paper.

\clearpage
\IEEEtriggeratref{16}

\bibliography{bibliographies/refs} 
\bibliographystyle{IEEEtran}

\end{document}